\documentclass{article}

 \usepackage[final]{neurips_2025_ml4ps}

\usepackage[utf8]{inputenc} 
\usepackage[T1]{fontenc}    
\usepackage{hyperref}       
\usepackage{url}            
\usepackage{booktabs}       
\usepackage{amsfonts}       
\usepackage{nicefrac}       
\usepackage{microtype}      
\usepackage{xcolor} 

\usepackage{amsmath, amssymb, amsthm}
\usepackage{bm}            
\usepackage{graphicx}      
\usepackage{hyperref}      
\usepackage{enumerate}     
\usepackage{algorithm}     
\usepackage{algpseudocode} 
\usepackage{subcaption}    
\usepackage{xcolor}        
\usepackage{thmtools}      
\usepackage{cleveref}      

\declaretheorem[name=Theorem,numberwithin=section]{theorem}

\declaretheorem[name=Proposition,numberlike=theorem]{proposition}

\numberwithin{equation}{section}

\usepackage{wrapfig}
\title{Latent Target Score Matching, with an application to Simulation-Based Inference}

\author{%
  Joohwan Ko\\
  University of Massachusetts Amherst
  \\
  \texttt{joohwanko@cs.umass.edu} \\
  \And
  Tomas Geffner
 \\
  NVIDIA \\
  \texttt{tgeffner@nvidia.com} \\
}

\begin{document}

\maketitle

\begin{abstract}
Denoising score matching (DSM) for training diffusion models may suffer from high variance at low noise levels. Target Score Matching (TSM) mitigates this when clean data scores are available, providing a low-variance objective. In many applications clean scores are inaccessible due to the presence of latent variables, leaving only joint signals exposed. We propose Latent Target Score Matching (LTSM), an extension of TSM to leverage joint scores for low-variance supervision of the marginal score. While LTSM is effective at low noise levels, a mixture with DSM ensures robustness across noise scales. Across simulation-based inference tasks, LTSM consistently improves variance, score accuracy, and sample quality.\looseness=-1
\end{abstract}

\section{Introduction}

Diffusion models trained with the denoising score matching (DSM) objective \citep{vincent2011connection} have emerged as a powerful class of generative models \citep{ho2020denoising,song2019generative,song2021scorebased,sohl2015deep,karras2022elucidating}. These models learn the score, the gradient of the log-density of the diffused data distribution, to parameterize a generative process for sample generation. While widely applicable \citep{vincent2011connection,song2019generative}, the DSM objective may suffer from large variance as the diffusion noise level approaches zero \citep{de2024target}, potentially  degrading accuracy and sample quality. Target Score Matching (TSM) \cite{de2024target} addresses this, providing a low-variance training target for low noise levels in settings where clean data scores are accessible.
Many problems of interest, however, involve latent variables \citep{brehmer2020mining}. Often, we are interested in modeling a subset of variables while treating the rest as nuisance or auxiliary components. Examples include coarse-graining in structural biology \citep{marrink2007martini,arts2023two}, backbone protein design \citep{watson2023novo,wu2024protein,ingraham2023illuminating,geffner2025proteina}, or inference tasks where only a few parameters are of interest \citep{brehmer2020mining}. In such cases, simulators or models may provide additional signals, such as joint density values and scores, which are unused by DSM and TSM \citep{brehmer2020mining,pmlr-v119-hermans20a}.\looseness=-1

We propose Latent Target Score Matching (LTSM), a diffusion training objective tailored to models with latent variables. LTSM extends TSM by showing that the marginal score of interest (over non-latent variables) can be expressed as the conditional expectation of joint scores involving the latents. This leads to low-variance regression targets in the low-noise regime, directly addressing the shortcomings of DSM. Furthermore, LTSM can be combined with DSM through a simple time-dependent mixture, which balances accuracy near zero noise with robustness at larger noise levels. We evaluate LTSM on several simulation-based inference \citep{tejero2020sbi} tasks, where “gray-box” models expose joint information over parameters, latents, and observations \citep{brehmer2020mining}. We observe that the mixture objective that combines DSM and LTSM consistently improves over DSM in terms of score accuracy and sample quality.\looseness=-1

\section{Preliminaries}

\textbf{Diffusion (VP-SDE).} Consider the variance preserving~\citep{song2021scorebased} stochastic differential equation (SDE)
\begin{align} \label{eq:fwd}
    \mathrm{d}\theta_t \;=\; -\tfrac{1}{2}\beta(t)\,\theta_t\,\mathrm{d}t + \sqrt{\beta(t)}\,\mathrm{d}W_t, \quad \theta_0 \sim p_\mathrm{data}(\theta),
\end{align}
for $t\in[0, 1]$ and \(W_t\) a standard Wiener process.
Intuitively, this process gradually diffuses the data distribution towards a standard Gaussian. 
Given $\theta_0$ and $\beta(t)\geq 0$, \cref{eq:fwd} can be simulated exactly as
$\theta_t \sim p_t(\theta_t \mid \theta_0)
= \mathcal{N}\!(\theta_t;\,\alpha(t)\,\theta_0,\,(1-\alpha^2(t))\,I)$, where
$\alpha(t)=\exp(-\tfrac12\int_0^t \beta(s)\,ds).$

Diffusion models can be used to sample $p_\mathrm{data}(\theta)$ by simulating the time-reversal of \cref{eq:fwd}\looseness=-1
\begin{equation} \label{eq:bwd}
\mathrm{d}\theta_t \;=\; \left[-\tfrac{1}{2}\beta(t)\,\theta_t \;-\; \beta(t)\,\nabla_{\theta}\log p_t(\theta_t)\right]\mathrm{d}t \;+\; \sqrt{\beta(t)}\,\mathrm{d}\bar W_t, \quad \theta_1 \sim N(0, I),
\end{equation}
from $t=1$ to $t=0$. This requires the score \(\nabla_\theta\log p_t \left(\theta_t\right)\), which is typically intractable. Diffusion models approximate it using a neural network, typically trained with the DSM loss
\citep{hyvarinen2005estimation,vincent2011connection}\looseness=-1
\begin{equation}
\label{eq:dsm-loss}
\mathcal{L}_{\text{DSM}}(\psi)
=
\int_0^1
\mathbb{E}_{\theta_0,\theta_t\vert\theta_0}
\Big[
  \lambda(t)\,
  \big\|
    s_\psi(\theta_t,t)
    - \underbrace{\nabla_{\theta_t}\log p_t(\theta_t\mid\theta_0)}_{y_\mathrm{DSM}(\theta_0, \theta_t, t) \text{ (regression target)}}
  \big\|^2
\Big]\mathrm{d}t.
\end{equation}
Since the DSM regression target \(y_\mathrm{DSM}(\theta_0, \theta_t, t)\) is an unbiased estimator of the true score, i.e., $\nabla_{\theta_t}\log p_t(\theta_t) = \mathbb{E}_{\theta_0 \vert \theta_t}\left[ \nabla{\theta_t}\log p_t(\theta_t\mid \theta_0) \right]$, minimizing the DSM loss trains $s_\psi$ to approximate the true score. For the VP-SDE, the regression target from eq.~\ref{eq:dsm-loss} is given by \( \nabla_{\theta_t}\log p_t(\theta_t\mid \theta_0) = (\alpha(t)\theta_0-\theta_t)/(1-\alpha(t)^2) \), with \( \alpha(t)=\exp \left[-\nicefrac{1}{2}\left(\beta_{\min}t+\frac{1}{2}(\beta_{\max}-\beta_{\min})t^2\right)\right] \) for a standard linear schedule \( \beta(t)=\beta_{\min}+(\beta_{\max} -\beta_{\min}) t \). The main drawback of this estimator is its high variance as $t \to 0$, a problem that motivated alternative estimators with better performance for small $t$.

\textbf{Target Score Matching \citep{de2024target}.}
TSM is a method designed for settings where the score of the clean data \(\nabla_{\theta_0}\log p(\theta_0)\) is known. It follows the same core principle as DSM, training a network \(s_\psi\) by regressing against an unbiased estimator of the true score \(\nabla_{\theta_t}\log p_t(\theta_t)\). The key innovation in TSM is the introduction of a different unbiased estimator, designed to have low variance as \(t \to 0\)\looseness=-1
\begin{equation} \mathcal{L}_{\text{TSM}}(\psi) = 
\int_0^1
\mathbb{E}_{\theta_0,\theta_t\vert\theta_0}
\Big[
\lambda(t)
\big\Vert
s_\psi(\theta_t,t) - \underbrace{\tfrac{1}{\alpha(t)} \nabla_{\theta_0}\log p(\theta_0)}_{y_\mathrm{TSM}(\theta_0, t) \text{ (regression target)}}
\big\Vert^2
\Big]\mathrm{d}t.
\end{equation}

\section{Latent Target Score Matching}
\label{sec:ltsm}

\begin{figure}[t]
    \centering
    \includegraphics[width=1.0\linewidth]{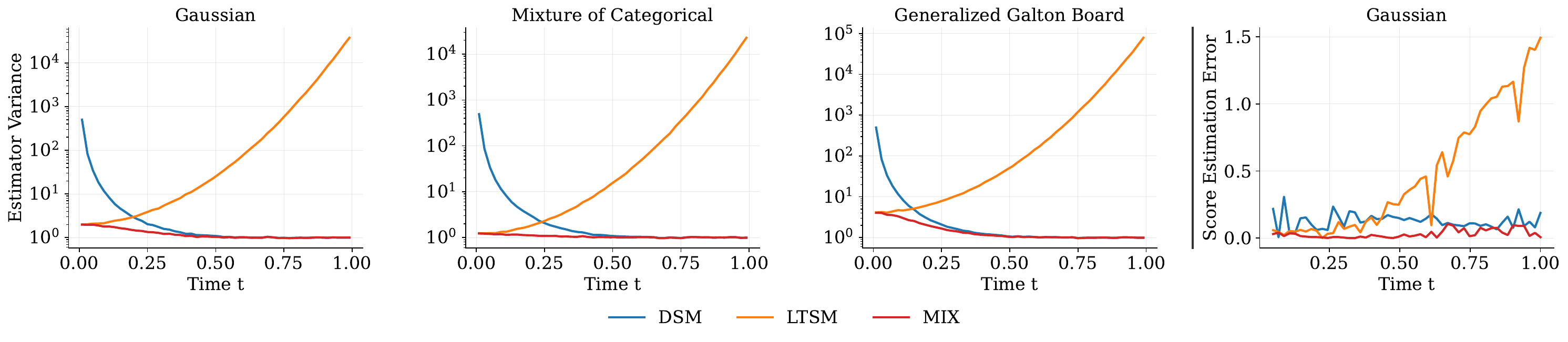}
    \caption{\small [Left] Regression target variance vs.\ diffusion time $t$ for DSM and LTSM across three tasks (\cref{sec:exps}). DSM rises as $t\to0$, while LTSM stays low at small $t$, increasing for larger $t$. The time-dependent mixture retains the best of both methods. [Right] Score estimation error, the DSM+LTSM mixture yields the best results.}
    \label{fig:variance}
\end{figure}

We consider a class of probabilistic models defined by a joint density $p(\theta, z)$, where \textbf{$\theta$} represents the variables of interest and $z$ represents auxiliary variables, which may be latent or simply nuisance components that we do not model directly. This structure is prevalent across domains, such as structural biology (coarse-graining \citep{marrink2007martini,arts2023two}, backbone protein design \citep{watson2023novo,wu2024protein,ingraham2023illuminating}), and simulation-based inference (SBI) \citep{brehmer2020mining}, among others. Our goal is to train a diffusion model to sample the marginal $p(\theta)$, assuming access to samples from $p(\theta, z)$ and the ability to evaluate the joint score $\nabla_{\theta} \log p(\theta, z)$.
While DSM can be applied, as it only requires samples of $\theta$, it ignores the information in the joint score and may suffer from high variance as $t \to 0$ \citep{de2024target}. TSM was designed to address this issue, but is not applicable, as it requires the clean marginal score $\nabla_{\theta_0} \log p(\theta_0)$, which is often intractable for latent-variable models. This motivates our development of Latent Target Score Matching (LTSM), a training objective that leverages the joint score to provide low-variance supervision for the marginal score.
LTSM extends TSM to the latent variable setting by showing that the joint score can be used to build an unbiased estimator of the marginal score of interest:\looseness=-1

\begin{proposition}[Latent Target Score Identity]
Under the VP-SDE that diffuses $\theta$ (and not $z$), we have\looseness=-1
\begin{equation}
\nabla_{\theta_t} \log p_t(\theta_t) = \frac{1}{\alpha(t)} \mathbb{E}_{\theta_0, z \vert \theta_t}[\nabla_{\theta_0} \log p(\theta_0, z)].
\end{equation}
\end{proposition}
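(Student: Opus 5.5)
The plan is to reduce the identity to the Target Score Identity of \citet{de2024target} for the marginal $p(\theta_0)$, and then insert the latent variable using the standard ``Fisher identity'' for marginal scores.

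\textbf{Step 1: translation-based form of the marginal score.} Write $p_t(\theta_t)=\int p(\theta_0)\,\mathcal{N}(\theta_t;\alpha(t)\theta_0,(1-\alpha^2(t))I)\,d\theta_0$ and substitute $\theta_0=(\theta_t-\sigma(t)\epsilon)/\alpha(t)$ with $\sigma^2(t)=1-\alpha^2(t)$. This gives
\[
p_t(\theta_t)=\alpha(t)^{-d}\int p\big((\theta_t-\sigma(t)\epsilon)/\alpha(t)\big)\,\mathcal{N}(\epsilon;0,I)\,d\epsilon,
\]
where $d$ is the dimension of $\theta$. The Gaussian factor no longer depends on $\theta_t$, so I differentiate under the integral. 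This requires $p$ to be differentiable with integrable gradient, which I state as a regularity assumption. The derivative is $\nabla_{\theta_t}p_t(\theta_t)=\alpha(t)^{-d-1}\int \nabla p(\theta_0)\,\mathcal{N}(\epsilon)\,d\epsilon$. Dividing by $p_t(\theta_t)$ and writing $\nabla p=p\,\nabla\log p$, I recognize the ratio as a posterior expectation:
\[
\nabla_{\theta_t}\log p_t(\theta_t)=\tfrac{1}{\alpha(t)}\,\mathbb{E}_{\theta_0\mid\theta_t}\big[\nabla_{\theta_0}\log p(\theta_0)\big].
\]
This step is exactly the TSM identity. The Jacobian factor $\alpha(t)^{-d}$ cancels in the ratio, which is the bookkeeping I will check most carefully.

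\textbf{Step 2: introduce the latent.} For the marginal score I use the identity
\[
\nabla_{\theta_0}\log p(\theta_0)=\frac{\int\nabla_{\theta_0}p(\theta_0,z)\,dz}{p(\theta_0)}=\mathbb{E}_{z\mid\theta_0}\big[\nabla_{\theta_0}\log p(\theta_0,z)\big].
\]
This again needs differentiation under the integral in $z$.

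\textbf{Step 3: combine.} The forward process diffuses only $\theta$, so $\theta_t$ depends on $z$ only through $\theta_0$. The joint is therefore $p(\theta_0,z)\,p_t(\theta_t\mid\theta_0)$, and $z\perp\theta_t\mid\theta_0$, which gives $p(z\mid\theta_0,\theta_t)=p(z\mid\theta_0)$. By the tower property,
\[
\mathbb{E}_{\theta_0\mid\theta_t}\,\mathbb{E}_{z\mid\theta_0}[\cdot]=\mathbb{E}_{\theta_0,z\mid\theta_t}[\cdot].
\]
Substituting this into Step 1 yields the claim.

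\textbf{Main obstacle.} The algebra is short; the real difficulty is justifying the two interchanges of differentiation and integration. I would handle them by assuming that $p(\cdot,z)$ is $C^1$ and that the relevant gradients are dominated by integrable functions, locally uniformly in $\theta_t$ (respectively in $\theta_0$), so that dominated convergence applies. For Step 1 I could alternatively avoid differentiating $p$ altogether: integrate by parts in $\theta_0$, using $\nabla_{\theta_t}\mathcal{N}=-\tfrac{1}{\alpha}\nabla_{\theta_0}\mathcal{N}$. The boundary terms vanish provided $p(\theta_0)\to0$ at infinity.
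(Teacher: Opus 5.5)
Your proof is correct, but it takes a different route from the paper's. The paper never forms the marginal score $\nabla_{\theta_0}\log p(\theta_0)$. It writes $p_t(\theta_t)=\iint p(\theta_0,z)\,q_t(\theta_t-\alpha\theta_0)\,d\theta_0\,dz$, differentiates the Gaussian kernel under the integral, and rewrites $\nabla_{\theta_t}q_t=-\frac{1}{\alpha}\nabla_{\theta_0}q_t$. It then integrates by parts in $\theta_0$ separately for each fixed $z$ and reads off $p(\theta_0,z\mid\theta_t)$ directly. The integration-by-parts variant you mention at the end is therefore exactly the paper's mechanism, but applied to the joint rather than the marginal, which is what lets the paper skip your Steps 2 and 3 entirely.

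What the paper's route buys is economy of hypotheses. It needs regularity of $p(\cdot,z)$ for each $z$ plus Fubini, but never interchanges $\nabla_{\theta_0}$ with $\int dz$ and never needs the marginal $p(\theta_0)$ to be differentiable. Smooth conditionals can produce a kinked marginal, e.g.\ a Laplace density written as a Gaussian scale mixture; there your Step 1, as stated, would need weak derivatives.

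What your factorization buys (TSM identity, Fisher identity, and the conditional independence $z\perp\theta_t\mid\theta_0$) is modularity, and it exposes structure the direct computation hides. It shows $y_{\mathrm{TSM}}(\theta_0,t)=\mathbb{E}_{z\mid\theta_0}[y_{\mathrm{LTSM}}(\theta_0,z,t)]$, i.e.\ the TSM target is the Rao--Blackwellization of the LTSM target. Hence $\mathrm{Var}(y_{\mathrm{LTSM}}\mid\theta_t)=\mathrm{Var}(y_{\mathrm{TSM}}\mid\theta_t)+\mathbb{E}[\mathrm{Var}(y_{\mathrm{LTSM}}\mid\theta_0)\mid\theta_t]$. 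This quantifies the cost of having only joint scores: an excess that, unlike the TSM part, need not vanish as $t\to0$.

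One small correction: the integration-by-parts variant does not ``avoid differentiating $p$''. It still puts the derivative on $p$, after differentiating the smooth kernel. What it trades is your integrable-gradient condition for vanishing boundary terms.
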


This identity provides a new unbiased estimator for the marginal score of interest learned by diffusion models $\nabla_{\theta_t} \log p_t(\theta_t)$, which can be directly used to derive the LTSM training objective
\begin{equation} \label{eq:loss-ltsm}
\mathcal{L}_{\text{LTSM}}(\psi) = \int_{0}^{1} \mathbb{E}_{p(\theta_0, z) p_t(\theta_t|\theta_0)} \big[\eta(t) \|s_\psi(\theta_t, t) - \underbrace{\tfrac{1}{\alpha(t)} \nabla_{\theta_0} \log p(\theta_0, z)}_{y_\mathrm{LTSM}(\theta_0, z, t) \text{ (regression target)}}\|^2\big] d\mathrm{t},
\end{equation}
 where we denote the LTSM regression target as \(y_\mathrm{LTSM}(\theta_0, z, t)\). As intended, the LTSM target remains well-conditioned for small $t$, as shown in \cref{fig:variance}. However, \cref{fig:variance} also shows that its variance increases for larger $t$, where DSM is often more stable. This complementary behavior, also noted in the original TSM work \citep{de2024target}, motivates a new regression target obtained as an affine combination of the DSM and LTSM targets. We define the mixture regression target as 
 \begin{equation}
 y_{\text{MIX}}(\theta_0, z,\theta_t, t;w_t) = w_t y_{\text{DSM}}(\theta_0, \theta_t, t) + (1 - w_t) y_{\text{LTSM}}(\theta_0, z, t),
 \label{eq:y_mix}
 \end{equation}
 where a time-dependent weight $w_t \in \mathbb{R}$ is used to mix both targets. Since $y_{\text{MIX}}$ is an unbiased estimator of the true score for any $w_t$, it can be used to obtain a new training objective
\begin{equation} \label{eq:loss-mix}
\mathcal{L}_{\mathrm{MIX}}(\psi)
=
\int_0^1
\mathbb{E}\Big[\eta(t)
  \big\lVert s_\psi(\theta_t,t) - y_{\text{MIX}}(\theta_0, z,\theta_t, t;w_t)\big\rVert^2
\Big]\mathrm{d}t.
\end{equation}
Intuitively, choosing $w_t$ trades variance optimally across time. While its value can be chosen heuristically, we show the $w_t$ that minimizes the variance of $y_{\text{mix}}$ can be computed analytically:
\begin{proposition}[Optimal mixture weight]
\label{prop:optimal-w}
Define the mixture regression target $y_\mathrm{MIX}$ as in \cref{eq:y_mix}. Then, for fixed $t$, the weight $w_t$ that minimizes the variance of $y_\mathrm{MIX}$ is
\begin{equation} \label{eq:opt_w}
w_t^* = \frac{\mathbb{E}[\Vert y_\mathrm{LTSM} \Vert^2] - \mathbb{E}[y_\mathrm{DSM}^\top \, y_\mathrm{LTSM}]}{\mathbb{E}[\Vert y_\mathrm{DSM} \Vert^2] + \mathbb{E}[\Vert y_\mathrm{LTSM} \Vert^2] - 2 \mathbb{E}[y_\mathrm{DSM}^\top \, y_\mathrm{LTSM}]},
\end{equation}
where expectations are w.r.t.\ $(\theta_0, \theta_t, z) \sim p(\theta_0, z) p(\theta_t \,\vert \, \theta_0)$. In practice, we learn \(w_t=\sigma(\mathrm{MLP}(t))\) jointly with the score network minimizing the  loss from \cref{eq:loss-mix}, avoiding expectation estimates.\looseness=-1
\end{proposition}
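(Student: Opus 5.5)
The plan is to write the variance of $y_\mathrm{MIX}$ as a quadratic in $w_t$ and minimize it. Here the variance of a random vector is its total variance, $\mathrm{Var}[Y]=\mathbb{E}\Vert Y-\mathbb{E}Y\Vert^2$, and expectations are over $p(\theta_0,z)p(\theta_t\mid\theta_0)$ at fixed $t$.

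First, both targets share a common mean, which lets us trade the variance for a second moment. Write $\mu=\mathbb{E}[y_\mathrm{DSM}]=\mathbb{E}[y_\mathrm{LTSM}]$. This holds because each target is an unbiased estimator of $\nabla_{\theta_t}\log p_t(\theta_t)$ conditionally on $\theta_t$. For $y_\mathrm{DSM}$ this is the DSM identity from the preliminaries, and for $y_\mathrm{LTSM}$ it is the Latent Target Score Identity. Taking the outer expectation over $\theta_t$ gives equal unconditional means. Since $w_t+(1-w_t)=1$, we get $\mathbb{E}[y_\mathrm{MIX}]=\mu$ for every $w_t$. Hence
\[
\mathrm{Var}[y_\mathrm{MIX}]=\mathbb{E}\Vert y_\mathrm{MIX}\Vert^2-\Vert\mu\Vert^2 ,
\]
and the term $\Vert\mu\Vert^2$ does not depend on $w_t$. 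The same argument works conditionally on $\theta_t$, so we can also minimize $\mathbb{E}\Vert y_\mathrm{MIX}\Vert^2$ directly, which is the averaged conditional variance.

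Second, set $A=\mathbb{E}\Vert y_\mathrm{DSM}\Vert^2$, $B=\mathbb{E}\Vert y_\mathrm{LTSM}\Vert^2$ and $C=\mathbb{E}[y_\mathrm{DSM}^\top y_\mathrm{LTSM}]$, and expand the second moment:
\[
f(w)=\mathbb{E}\Vert w\,y_\mathrm{DSM}+(1-w)\,y_\mathrm{LTSM}\Vert^2 = w^2A+(1-w)^2B+2w(1-w)C .
\]
Its derivative is $f'(w)=2wA-2(1-w)B+2(1-2w)C$. Setting $f'(w)=0$ gives $w(A+B-2C)=B-C$, which is exactly $w_t^*$ in \cref{eq:opt_w}.

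Third, confirm that this stationary point is the minimizer. We have $f''=2(A+B-2C)=2\,\mathbb{E}\Vert y_\mathrm{DSM}-y_\mathrm{LTSM}\Vert^2\ge 0$, so $f$ is convex. The denominator is strictly positive unless $y_\mathrm{DSM}=y_\mathrm{LTSM}$ almost surely. In that degenerate case every weight gives the same variance, and we can note this as a caveat.

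The main step needing care is the first one: justifying that the means coincide, so that minimizing the variance is the same as minimizing the second moment. This is where the Latent Target Score Identity is needed, together with the tower property and finiteness of the second moments. The remaining steps are routine algebra.
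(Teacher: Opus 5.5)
Your proposal is correct and takes the same route as the paper's proof: minimize $\mathbb{E}\Vert y_\mathrm{MIX}(w_t)\Vert^2$ as a quadratic in $w_t$, differentiate, and solve the resulting linear equation to obtain \cref{eq:opt_w}. Beyond that, you justify two steps the paper leaves implicit. First, the mean of $y_\mathrm{MIX}$ does not depend on $w_t$, by the DSM identity and the Latent Target Score Identity, so minimizing the second moment is equivalent to minimizing the variance. Second, $f''=2\,\mathbb{E}\Vert y_\mathrm{DSM}-y_\mathrm{LTSM}\Vert^2\ge 0$, so the stationary point is indeed the minimizer.
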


\begin{proof}
We have $w_t^* = \arg\min_{w_t} \mathbb{E}[\Vert y_\mathrm{MIX}(w_t) \Vert^2]$. Expanding $y_\mathrm{MIX}$ in terms of $w_t$, $y_\mathrm{DSM}$ and $y_\mathrm{LTSM}$, differentiating w.r.t.\ $w_t$, and setting the result to zero yields \cref{eq:opt_w}.
\end{proof}

\section{Experiments} \label{sec:exps}

\begin{figure}[t]
    \centering
    \includegraphics[width=0.9\linewidth]{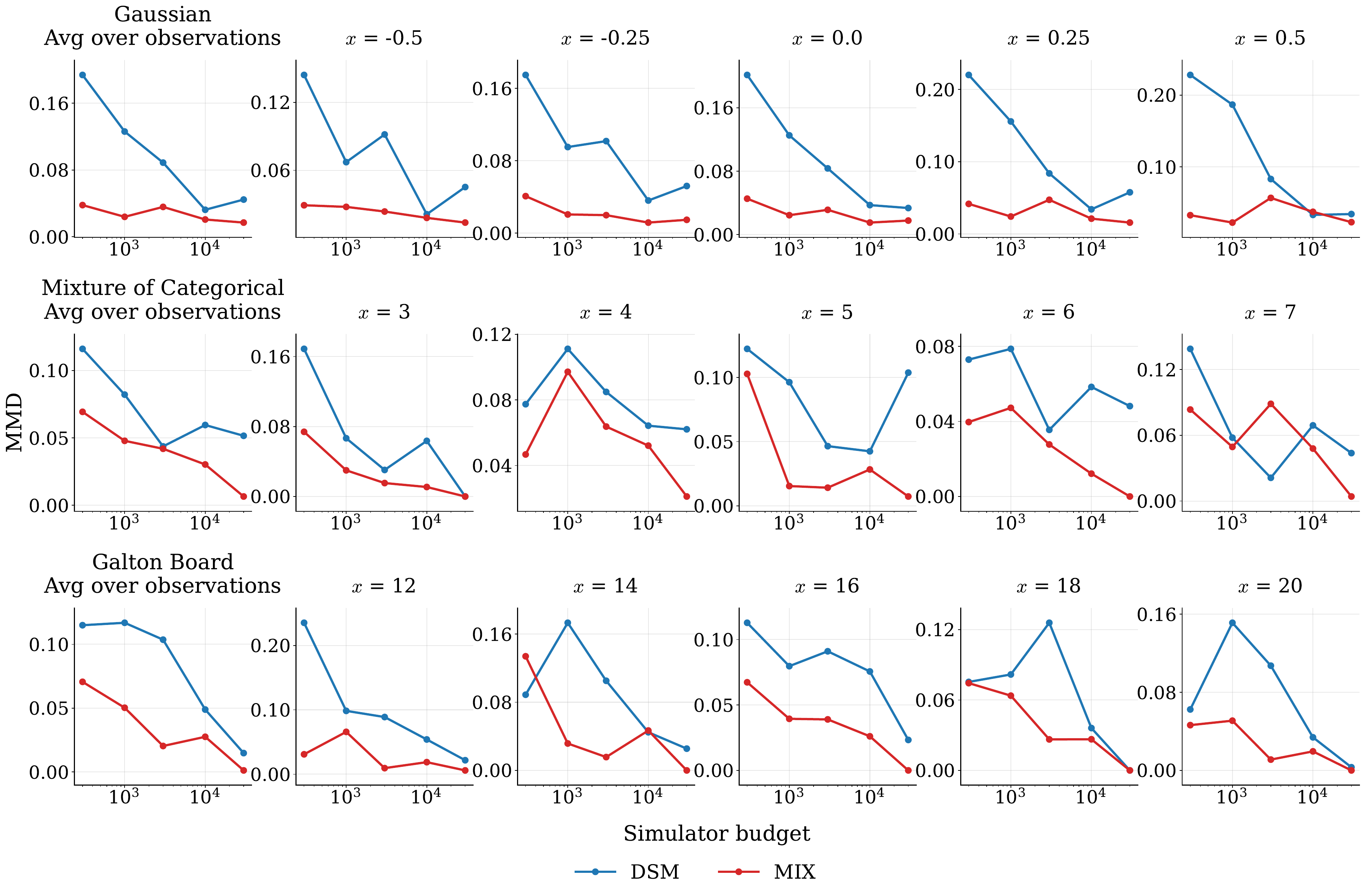}
    \caption{MMD (lower is better) vs.\ simulator budget (i.e.\ size of the dataset used for training) for each task (rows). Left column averages over five observations $x$; remaining columns show results for different potential values for the observations $x$. The Mixture improves over DSM, with the largest gap happening for the lower number of simulator calls.}
    \label{fig:mmd}
\end{figure}

We evaluate DSM (\cref{eq:dsm-loss}), LTSM (\cref{eq:loss-ltsm}), and the mixture approach (\cref{eq:loss-mix}) on three simulation-based inference (SBI) tasks. SBI is a powerful framework for inferring parameters \textbf{$\theta$} of complex simulators whose likelihood function $p(x|\theta)$ is intractable. Many such simulators are "gray-box" models that rely on internal latent variables \textbf{$z$} to generate observations \textbf{$x$}, exposing a tractable joint density $p(\theta,z,x)=p(\theta)\,p(z\mid\theta)\,p(x\mid\theta,z)$. Given an observation $x^\star$, the goal remains to find the posterior $p(\theta\mid x^\star)$ (see \cref{app:sbi_background} for a detailed background). Neural SBI methods-posterior (NPE), likelihood (NLE), and diffusion-based approaches \citep{papamakarios2016fast,lueckmann2017flexible,NEURIPS2018_2e9f978b,wood2010statistical,papamakarios2019sequential,pmlr-v96-lueckmann19a,geffner2023compositional}-train amortized models on simulator-generated datasets $\{(\theta_i,z_i,x_i)\}_{i=1}^M$, which can be used for any \(x\) at inference time. In practice, generating large datasets by calling the simulator is often expensive. This motivates a focus on sample efficiency: achieving strong performance from a limited number of simulator calls. While diffusion models trained with DSM have proven to be a powerful approach for SBI \citep{geffner2023compositional}, we study whether we can improve their sample efficiency by leveraging simulator's joint information via LTSM. We investigate this by training a score network $s_\psi(\theta_t, t, x)$ to approximate $\nabla_{\theta_t} \log p_t(\theta_t|x)$ and comparing its performance when supervised by the DSM target, the LTSM target, or their mixture.\looseness=-1



\textbf{Simulators.} We evaluate our approach on three simulators: a Gaussian model where scores can be computed exactly, a Mixture of Categoricals, and a Generalized Galton Board \citep{brehmer2020mining}. We briefly describe the models here, and provide full details in \cref{app:simulators}.

\emph{Gaussian:} $\theta\!\sim\!\mathcal{N}(0,1)$; $z\!\mid\!\theta\!\sim\!\mathcal{N}(\theta,1)$; $x\!\mid\!z\!\sim\!\mathcal{N}(z,1)$. By conjugacy $\theta\!\mid\!x\!\sim\!\mathcal{N}(x/3,\,2/3)$ and, diffusing only $\theta$, $\theta_t\!\mid\!x\!\sim\!\mathcal{N}(\alpha(t)\,x/3,\,1-\alpha^2(t)/3)$ with
$\nabla_{\theta_t}\log p_t(\theta_t\!\mid\!x)=\big(\alpha(t)\,x/3-\theta_t\big)/(1-\alpha^2(t)/3)$.
These closed forms give exact error/variance curves.

\emph{Mixture of Categorical:}
$p(\theta)=\mathcal{N}(0,1)$; $z\!\mid\!\theta\!\sim\!\mathrm{Bernoulli}(\sigma(\theta))$ with $\sigma(u)=1/(1+e^{-u})$; $x\!\mid\!z\!\sim\!\mathrm{Categorical}(\varphi^{(z)})$ over $K$ classes, $\varphi^{(0)}=\varphi_0,\ \varphi^{(1)}=\varphi_1\in\Delta^{K-1}$. 

\emph{Generalized Galton Board:}
$p(\theta)=\mathcal{N}(0,1)$; for $i=1,\dots,R$ (rows), $z_i\!\mid\!\theta\!\sim\!\mathrm{Bernoulli}(\sigma(\theta))$ with logistic $\sigma$; define $s_i=2z_i-1$ and
$x=\texttt{init\_pos}+\sum_{i=1}^{R}s_i$ with $\texttt{init\_pos}=\lfloor\texttt{num\_nails}/2\rfloor$.

\textbf{Evaluation Metrics and Setup.}
To compare methods, we measure three key diagnostics in order: (1) the conditional variance of the regression target $(y_{\mathrm{DSM}}^x,\;y_{\mathrm{LTSM}}^x,\;y_{\mathrm{mix}}^x)$ as a function of the diffusion time $t$; (2) the $\ell_1$ error between the learned score $s_\psi(\theta_t,t,x)$ and the true score $\nabla \log p_t(\theta_t \vert x)$ achieved by the different losses for the Gaussian model, where the true score can be computed analytically; and (3) the quality of the final posterior samples, assessed by the MMD \citep{papamakarios2019sequential,greenberg2019automatic,pmlr-v119-hermans20a} with a Gaussian kernel (\cref{app:kern_mmd}). Unless stated otherwise, all methods share the same network architecture, noise schedule, and training budgets to ensure a fair comparison.


\textbf{Results.} Our results show a consistent trade-off across all metrics. We begin the analysis with the regression-target variance (\cref{fig:variance}, left panel). As hypothesized, the DSM regression target's variance grows large as $t\to0$, whereas the LTSM target remains well-conditioned at low noise levels and only grows for larger $t$. The mixed regression target (with the optimal weight from \cref{eq:opt_w}) leverages the best of both worlds, combining their complementary strengths to maintain low variance across all noise levels. We report both the optimal weights $w_t^\ast$ and the trained $w_t$ as functions of $t$ in \cref{app:opt_weights}. This fundamental difference in variance directly translates to score estimation accuracy. On the Gaussian task, where the true score is known (\cref{fig:variance}, right panel), the Mixture (MIX) objective yields the approximation with the lowest overall error, by leveraging LTSM's accuracy at small $t$ and DSM's robustness at larger $t$. Ultimately, these improvements in score estimation yield higher-quality posterior samples, the main goal in SBI. As shown by the MMD results in \cref{fig:mmd}, the MIX method consistently generates better posterior samples than DSM across all tasks, and for several different potential observations $x^*$. The performance gains are most significant for smaller simulator budgets, demonstrating the improved sample efficiency of our approach. \looseness=-1




\clearpage

\bibliographystyle{plain}

\bibliography{references}

\clearpage
\appendix
\section{Experiment Details}
\label{app:exp}

\subsection{Simulation-based Inference}
\label{app:sbi_background}

Simulation-Based Inference (SBI) \citep{cranmer2020frontier,beaumont2019approximate}, also known as likelihood-free inference, is a class of methods for performing Bayesian inference on models where the likelihood function is intractable. In many scientific fields, mechanistic models are expressed as simulators parameterized by \textbf{$\theta$} that generate observations \textbf{$x$}. This defines an implicit likelihood $p(x|\theta)$ which can be sampled from by running the simulator, but cannot be evaluated analytically. The goal of SBI is to estimate the posterior distribution $p(\theta|x)$ given a real-world observation.

The intractability of the likelihood prevents the use of traditional inference algorithms like Markov Chain Monte Carlo \citep{metropolis1953equation,neal1993probabilistic} or Variational Inference \citep{jordan1999introduction,blei2017variational}, which require explicit likelihood evaluations. Neural SBI methods \citep{papamakarios2016fast,lueckmann2017flexible,NEURIPS2018_2e9f978b,wood2010statistical,papamakarios2019sequential,pmlr-v96-lueckmann19a,geffner2023compositional} overcome this by training a surrogate model (e.g., a normalizing flow \cite{papamakarios2021normalizing} or a diffusion model \citep{song2021scorebased}) to approximate the posterior, likelihood, or likelihood ratio, using a dataset of $(\theta, x)$ pairs generated by the simulator.

A crucial aspect of many simulators is their reliance on internal latent variables $z$ to produce an observation \cite{brehmer2020mining}. These "gray-box" simulators have a structure $p(\theta, z, x) = p(\theta) p(z|\theta) p(x|\theta, z)$. While the marginal likelihood $p(x|\theta)$ remains intractable (it would require marginalizing over the latent variables $z$), the full joint likelihood $p(x, \theta, z)$ is often tractable. This is the exact setting where our proposed method, LTSM, is applicable.

A classic example is the Galton Board (see \cref{fig:galton}):
\begin{itemize}
    \item The parameters $\theta$ define the configuration of the nails in each row.
    \item The latent variables $z$ represent the stochastic path of a dropped ball, a sequence of binary left/right bounces at each nail.
    \item The observation $x$ is the final bin where the ball lands.
\end{itemize}
The inference task is: given that a ball landed in bin $x$, what is the posterior distribution $p(\theta|x)$? The likelihood $p(x|\theta)$ is computationally expensive, as it requires summing the probabilities of all possible paths $z$ that end in bin $x$. However, for any single, complete path $z$, the joint probability $p(x, z, \theta)$ can be computed. LTSM is designed to exploit this accessible joint information to learn the posterior score of interest efficiently.

\subsection{DSM and LTSM for Posterior Estimation}
\label{app:dsm_ltsm_post_est}
All identities carry over when conditioning on an observation $x$. We train a conditional score network $s_\psi(\theta_t,t,x)$ and keep the forward kernel $p_t(\theta_t\mid\theta_0)$ unchanged. The DSM target remains
\[
y^{x}_{\mathrm{DSM}}(\theta_0,\theta_t,t)\;:=\;\nabla_{\theta_t}\log p_t(\theta_t\mid\theta_0),
\quad\text{with}\quad
\mathbb{E}\!\left[y^{x}_{\mathrm{DSM}}\mid \theta_t,x\right]=\nabla_{\theta_t}\log p_t(\theta_t\mid x).
\]
LTSM becomes
\[
y^{x}_{\mathrm{LTSM}}(\theta_0,z,t,x)\;:=\;\tfrac{1}{\alpha(t)}\,\nabla_{\theta_0}\log p(\theta_0,z\mid x)
\;=\;\tfrac{1}{\alpha(t)}\,\nabla_{\theta_0}\log p(\theta_0,z,x),
\]
and satisfies $\mathbb{E}\!\left[y^{x}_{\mathrm{LTSM}}\mid \theta_t,x\right]=\nabla_{\theta_t}\log p_t(\theta_t\mid x)$. Hence the mixed target is
\[
y^{x}_{\mathrm{mix}}(t)\;=\;(1-w_t)\,y^{x}_{\mathrm{LTSM}}+w_t\,y^{x}_{\mathrm{DSM}},
\]
and the loss integrates expectations over $(\theta_0,z,x)\sim p(\theta,z,x)$ and $\theta_t\sim p_t(\cdot\mid\theta_0)$.

\begin{figure}[t]
    \centering
    \includegraphics[width=0.5\linewidth]{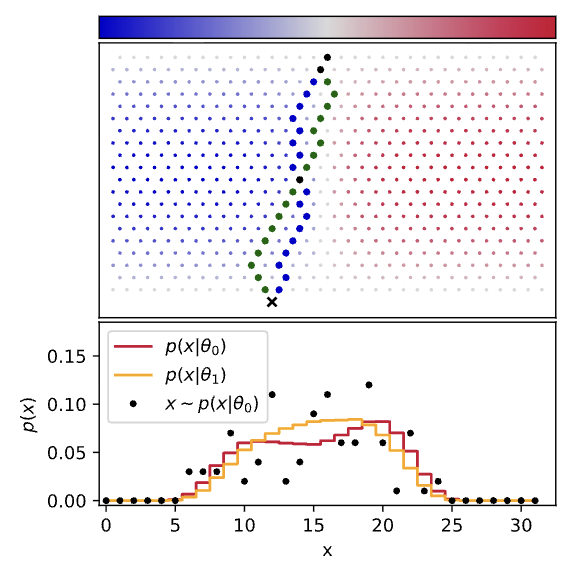}
    \caption{Toy Galton board depiction, extracted from Brehmer et al.\ \citep{brehmer2020mining}. The blue and green dark dots on the top figure represent potential paths of a ball dropped from the top. The positioning of the ``nails'' is defined by the parameters $\theta$, and the final position where the ball lands is denoted by $x$. For different parameters $\theta$ the distribution of positions for the ball's landing changes.}
    \label{fig:galton}
\end{figure}

\subsection{Simulator Details}
\label{app:simulators}

All simulators follow the SBI factorization
$p(\theta,z,x)=p(\theta)\,p(z\mid\theta)\,p(x\mid z,\theta)$.
We diffuse only $\theta$ under the VP–SDE and keep $z,x$ fixed. For LTSM we use the joint-score target
$y_{\mathrm{LTSM}}(\theta_0,z,t)=(1/\alpha(t))\,\nabla_{\theta_0}\log p(\theta_0,z,x)$,
where the joint gradient is obtained by forming the joint log-density and backpropagating to $\theta_0$ via autodiff.

\textbf{Gaussian.}
$\theta\sim\mathcal{N}(0,1)$,\; $z\mid\theta\sim\mathcal{N}(\theta,1)$,\; $x\mid z\sim\mathcal{N}(z,1)$.
Conjugacy gives $\theta\mid x\sim\mathcal{N}(x/3,\,2/3)$. Diffusing only $\theta$ yields
$\theta_t\mid x\sim\mathcal{N}(\alpha(t)\,x/3,\;1-\alpha^2(t)/3)$ and the closed-form posterior score
\[
\nabla_{\theta_t}\log p_t(\theta_t\mid x)=\frac{\alpha(t)\,x/3-\theta_t}{\,1-\alpha^2(t)/3\,}.
\]
These closed forms are used for exact score errors and label-variance.
The (analytic) joint gradient w.r.t.\ $\theta$ is
\[
\nabla_{\theta}\log p(\theta,z,x)= -\theta + (z-\theta).
\]

\textbf{Mixture of Categorical.}
$p(\theta)=\mathcal{N}(0,1)$;\;
$z\mid\theta\sim\mathrm{Bernoulli}(\sigma(\theta))$ with $\sigma(u)=1/(1+e^{-u})$;\;
$x\mid z\sim\mathrm{Categorical}(\varphi^{(z)})$ over $K$ classes.
We instantiate $\varphi^{(0)}=\varphi_0$ and $\varphi^{(1)}=\varphi_1$ once by sampling logits and applying a softmax; these $K$-dimensional probability vectors are then held fixed.
Because $x\perp\!\!\!\perp\theta\mid z$,
\[
\log p(\theta,z,x)=\log p(\theta)+\log p(z\mid\theta)+\log p(x\mid z)
\quad\Rightarrow\quad
\nabla_{\theta}\log p(\theta,z,x)= -\theta + \big(z-\sigma(\theta)\big).
\]
For posterior references we use rejection sampling over simulator draws, retaining $\theta$ when $x$ matches a target class.

\textbf{Generalized Galton Board.}
$p(\theta)=\mathcal{N}(0,1)$;\; for $i=1,\dots,R$ (number of rows),
$z_i\mid\theta\sim\mathrm{Bernoulli}(\sigma(\theta))$ with logistic $\sigma$.
Define steps $s_i=2z_i-1\in\{-1,+1\}$ and a deterministic readout
\[
x=\texttt{init\_pos}+\sum_{i=1}^{R} s_i,
\qquad
\texttt{init\_pos}=\big\lfloor \texttt{num\_nails}/2 \big\rfloor,
\]
where \texttt{num\_nails} is the number of bins/pegs across the board (so $x\in\{0,\ldots,\texttt{num\_nails}-1\}$).
Since $x$ depends on $z$ only, the joint gradient is
\[
\nabla_{\theta}\log p(\theta,z,x)= -\theta + \sum_{i=1}^{R}\big(z_i-\sigma(\theta)\big),
\]
computed via autodiff in practice. Posterior references $p(\theta\mid x^\star)$ are approximated by rejection sampling over simulator draws that hit the target bin $x^\star$.
\subsection{Kernel and MMD Details}
\label{app:kern_mmd}

\textbf{Kernel.}
All distributional comparisons are performed in $\theta$-space ($\theta\in\mathbb{R}^{d_\theta}$) using the Gaussian/RBF kernel
\[
k_\sigma(u,v)\;=\;\exp\!\Big(-\tfrac{\|u-v\|_2^2}{2\sigma^2}\Big).
\]
For each simulator and fixed observation $x^\star$, a single bandwidth $\sigma$ is selected once by the median-heuristic (median pairwise distance on a pilot set of reference posterior samples $\{\theta_i\!\sim p(\theta\mid x^\star)\}$) and then held fixed across methods and budgets.

\textbf{Population MMD.}
Given distributions $P$ and $Q$ on $\mathbb{R}^{d_\theta}$, the squared maximum mean discrepancy is
\[
\mathrm{MMD}^2_k(P,Q)
\;=\;
\mathbb{E}_{x,x'\sim P}k(x,x')
\;+\;
\mathbb{E}_{y,y'\sim Q}k(y,y')
\;-\;
2\,\mathbb{E}_{x\sim P,\;y\sim Q}k(x,y),
\]
which equals $0$ iff $P=Q$ for characteristic kernels such as the Gaussian.

\textbf{Estimator.}
For a fixed $x^\star$, let $\{\theta_i\}_{i=1}^m\!\sim p(\theta\mid x^\star)$ (reference) and
$\{\tilde\theta_j\}_{j=1}^n\!\sim \hat p(\theta\mid x^\star)$ (model). We use the standard unbiased U-statistic
\[
\widehat{\mathrm{MMD}}^2_u
=\frac{1}{m(m-1)}\!\!\sum_{i\neq i'}\! k_\sigma(\theta_i,\theta_{i'})
+\frac{1}{n(n-1)}\!\!\sum_{j\neq j'}\! k_\sigma(\tilde\theta_j,\tilde\theta_{j'})
-\frac{2}{mn}\sum_{i=1}^m\sum_{j=1}^n k_\sigma(\theta_i,\tilde\theta_j),
\]
and report $\widehat{\mathrm{MMD}}=\sqrt{\max\{\widehat{\mathrm{MMD}}^2_u,\,0\}}$ . The same kernel (and $\sigma$) is used for all methods and training budgets within each task.

\subsection{Mixture Weights: Optimal vs. Learned}
\label{app:opt_weights}
Recall the mixture target
\begin{equation}
y_{\mathrm{mix}}^x=(1-w_t)\,y_{\mathrm{LTSM}}^x+w_t\,y_{\mathrm{DSM}}^x,
\end{equation}
where $w_t\in[0,1]$ multiplies the DSM target ($w_t{=}0$ = pure LTSM; $w_t{=}1$ = pure DSM).
\Cref{prop:optimal-w} gives the variance–optimal coefficient
\begin{equation}
w_t^* = \frac{\mathbb{E}[\Vert y_\mathrm{LTSM} \Vert^2] - \mathbb{E}[y_\mathrm{DSM}^\top \, y_\mathrm{LTSM}]}{\mathbb{E}[\Vert y_\mathrm{DSM} \Vert^2] + \mathbb{E}[\Vert y_\mathrm{LTSM} \Vert^2] - 2 \mathbb{E}[y_\mathrm{DSM}^\top \, y_\mathrm{LTSM}]},
\end{equation}
with expectations over $(\theta_0,\theta_t,z)\sim p(\theta_0,z)\,p_t(\theta_t\mid\theta_0)$ at fixed $t$.
In \cref{fig:optimal_weights} we compute $w_t^\ast$ by Monte Carlo using large simulator-generated samples at each $t$ and clip to $[0,1]$.

\textbf{Learned Weights Used In Experiments.} 
For all results \emph{other than} the variance diagnostic, the mixture employs a learned schedule $w_t$.
We parameterize $w_t=\mathrm{MLP}(t)$ with a sigmoid output and learn it end-to-end jointly with the score network by minimizing the standard mixture score-matching loss from \cref{sec:ltsm} (no separate variance objective).
As shown in \cref{fig:trained_weights}, the learned schedules closely follow the analytic trend of $w_t^\ast$.

\begin{figure}[t]
    \centering
    \includegraphics[width=1.0\linewidth]{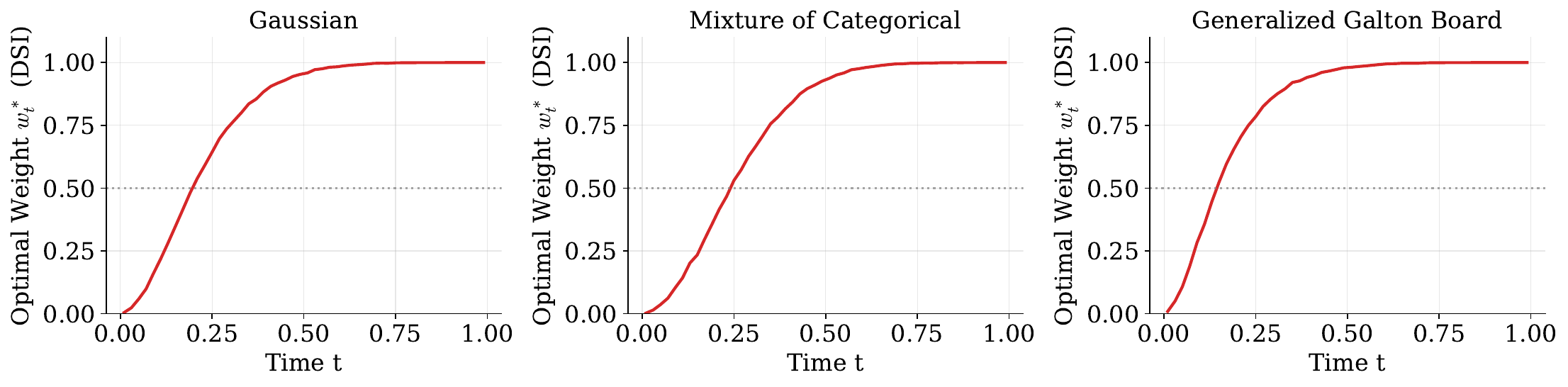}
    \caption{Variance-optimal DSM weight $w_t^\ast$ vs.\ time $t$.
    Computed via \Cref{prop:optimal-w}. Here $w_t^\ast$ is the coefficient on the DSM target in the mixture: $w_t^\ast{=}0$ means pure LTSM, $w_t^\ast{=}1$ means pure DSM. As expected, $w_t^\ast$ is low near $t\!\approx\!0$ and increases toward $1$ as $t\!\to\!1$.}
    
   \label{fig:optimal_weights}
\end{figure}
\begin{figure}[t]
    \centering
    \includegraphics[width=1.0\linewidth]{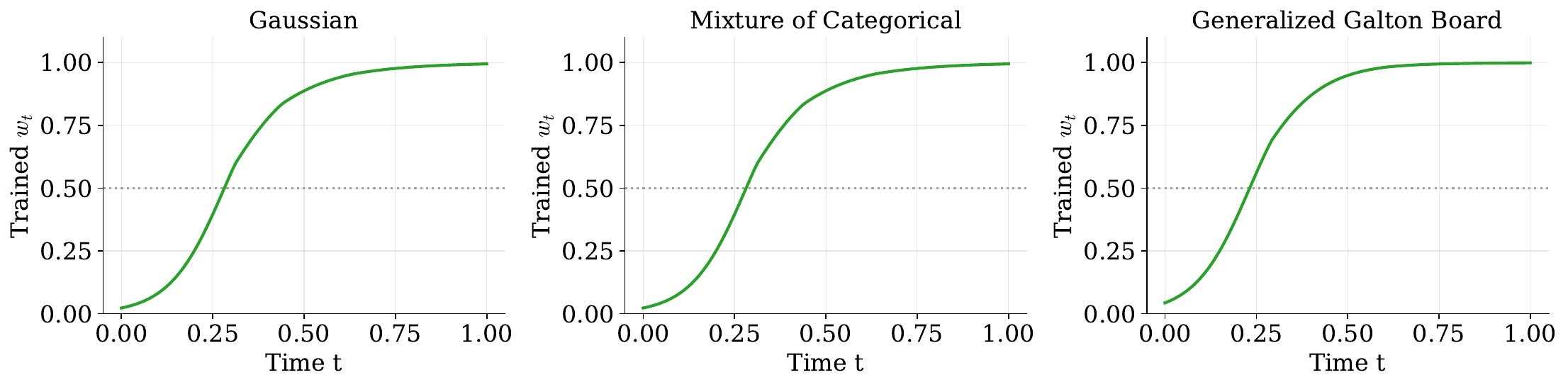}
\caption{Learned DSM weight $w_t$ vs.\ time $t$.
$w_t$ is the DSM coefficient in the mixture (0 = LTSM, 1 = DSM), trained jointly with the score network. It is low near $t\!\approx\!0$ and rises toward $1$ as $t\!\to\!1$, closely tracking $w_t^\ast$.}
   \label{fig:trained_weights}
\end{figure}
\section{Proofs}
\subsection{Standard Setting and Assumptions}
We diffuse only $\theta\in\mathbb{R}^{d_\theta}$ under the VP–SDE forward kernel
\[
\theta_t=\alpha(t)\,\theta_0+\sigma(t)\,\varepsilon_t,\qquad \varepsilon_t\!\perp\!(\theta_0,z),\quad \alpha(t)>0,
\]
so that $p_t(\theta_t\mid\theta_0)=q_t(\theta_t-\alpha(t)\theta_0)$ for some smooth noise density $q_t$.
Let $p_t(\theta_t)$ denote the marginal of $\theta_t$ when $(\theta_0,z)\sim p(\theta_0,z)$.

Also, the following assumption holds for both propositions. \emph{(A1)} $p(\theta_0,z)$ and $q_t$ are $C^1$ in their arguments and integrable; \emph{(A2)} differentiation may be interchanged with integration (dominated convergence); \emph{(A3)} the integration-by-parts (IBP) boundary term in $\theta_0$ vanishes.
\subsection{Latent Target Score Identity}
\textbf{Proposition 3.1} (Latent Target Score Identity, LTSI). \textit{Under the VP-SDE that diffuses $\theta$ and keeps $z$ fixed, the following identity holds:}
\begin{equation}
\nabla_{\theta_t} \log p_t(\theta_t) = \frac{1}{\alpha(t)} \mathbb{E}_{\theta_0, z \vert \theta_t}[\nabla_{\theta_0} \log p(\theta_0, z)].
\end{equation}
\begin{proof}
By independence,
\[
p_t(\theta_t)=\iint p(\theta_0,z)\,p_t(\theta_t\mid\theta_0)\,d\theta_0\,dz
=\iint p(\theta_0,z)\,q_t(\theta_t-\alpha\theta_0)\,d\theta_0\,dz,
\]
where $\alpha=\alpha(t)$ for brevity. Differentiating inside the integral (A2),
\[
\nabla_{\theta_t}p_t(\theta_t)
=\iint p(\theta_0,z)\,\nabla_{\theta_t}q_t(\theta_t-\alpha\theta_0)\,d\theta_0\,dz.
\]
By the chain rule with $u=\theta_t-\alpha\theta_0$,
\[
\nabla_{\theta_t}q_t(u)=\nabla_u q_t(u)
\qquad\text{and}\qquad
\nabla_{\theta_0}q_t(u)=(-\alpha)\,\nabla_u q_t(u),
\]
hence $\nabla_{\theta_t}q_t(\theta_t-\alpha\theta_0)=-(1/\alpha)\,\nabla_{\theta_0}q_t(\theta_t-\alpha\theta_0)$.
Substitute to get
\[
\nabla_{\theta_t}p_t(\theta_t)
=-\frac{1}{\alpha}\iint p(\theta_0,z)\,\nabla_{\theta_0}q_t(\theta_t-\alpha\theta_0)\,d\theta_0\,dz.
\]
Apply vector IBP in $\theta_0$ (A3): for each fixed $z,\theta_t$,
\[
\int p(\theta_0,z)\,\nabla_{\theta_0}q_t(\theta_t-\alpha\theta_0)\,d\theta_0
=\Big[p(\theta_0,z)\,q_t(\theta_t-\alpha\theta_0)\Big]_{\partial}-\int q_t(\theta_t-\alpha\theta_0)\,\nabla_{\theta_0}p(\theta_0,z)\,d\theta_0,
\]
and the boundary term $[\cdot]_\partial=0$ by (A3). Therefore
\[
\nabla_{\theta_t}p_t(\theta_t)
=\frac{1}{\alpha}\iint q_t(\theta_t-\alpha\theta_0)\,\nabla_{\theta_0}p(\theta_0,z)\,d\theta_0\,dz.
\]
Multiply and divide by $p_t(\theta_t)$ and also by $p(\theta_0,z)$ inside the integral:
\[
\begin{aligned}
\nabla_{\theta_t}\log p_t(\theta_t)
&=\frac{1}{p_t(\theta_t)}\,\nabla_{\theta_t}p_t(\theta_t)\\[2pt]
&=\frac{1}{\alpha}\iint
\underbrace{\frac{p(\theta_0,z)\,q_t(\theta_t-\alpha\theta_0)}{p_t(\theta_t)}}_{=\,p(\theta_0,z\mid\theta_t)}
\ \frac{\nabla_{\theta_0}p(\theta_0,z)}{p(\theta_0,z)}\ d\theta_0\,dz\\[2pt]
&=\frac{1}{\alpha}\ \mathbb{E}\!\left[\nabla_{\theta_0}\log p(\theta_0,z)\ \middle|\ \theta_t\right].
\end{aligned}
\]
\end{proof}
\subsection{Latent Target Score Matching}
\begin{proposition}[Latent Target Score Matching (LTSM)]
Let 
\(
y_{\mathrm{LTSM}}(\theta_0,z,t):=\alpha(t)^{-1}\nabla_{\theta_0}\log p(\theta_0,z)
\)
and $\eta(t)>0$.
Define
\[
\mathcal{L}_{\mathrm{LTSM}}(\psi)
=\int_0^1 \eta(t)\,
\mathbb{E}_{\,p(\theta_0,z)\,p_t(\theta_t\mid\theta_0)}
\big[\ \|s_\psi(\theta_t,t)-y_{\mathrm{LTSM}}(\theta_0,z,t)\|^2\ \big]\ dt.
\]
Then, for almost every $t\in(0,1]$, the unique $L^2$ minimizer is
\(
s_\psi^\star(\theta_t,t)=\nabla_{\theta_t}\log p_t(\theta_t).
\)
\end{proposition}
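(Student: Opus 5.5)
The plan is the standard bias--variance (orthogonal projection) decomposition of a squared-error regression loss, with the Latent Target Score Identity (Proposition 3.1) supplying the identification of the conditional mean. Fix $t\in(0,1]$ and treat $s(\cdot,t)$ as an arbitrary measurable function of $\theta_t$ with $\mathbb{E}_{p_t}\|s(\theta_t,t)\|^2<\infty$. This is the ``$L^2$ minimizer'' reading of the statement: we minimize over all such functions, not over a particular parametric family. Write $m(\theta_t,t):=\mathbb{E}[y_{\mathrm{LTSM}}(\theta_0,z,t)\mid\theta_t]$. We need $y_{\mathrm{LTSM}}$ to be square integrable under $p(\theta_0,z)p_t(\theta_t\mid\theta_0)$. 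This amounts to finite Fisher-type information $\mathbb{E}\|\nabla_{\theta_0}\log p(\theta_0,z)\|^2<\infty$, which I would add to (A1)--(A3) as a mild assumption. Since $\alpha(t)>0$, it guarantees that $m$ exists and is itself in $L^2(p_t)$, by conditional Jensen.

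\textbf{Step 1 (decomposition).} Add and subtract $m$ inside the norm:
\[
\mathbb{E}\|s-y_{\mathrm{LTSM}}\|^2=\mathbb{E}\|s(\theta_t,t)-m(\theta_t,t)\|^2+\mathbb{E}\|m(\theta_t,t)-y_{\mathrm{LTSM}}\|^2+2\,\mathbb{E}\big[(s-m)^\top(m-y_{\mathrm{LTSM}})\big].
\]
The cross term vanishes by the tower property: conditioning on $\theta_t$, the factor $s-m$ is $\theta_t$-measurable, and $\mathbb{E}[m-y_{\mathrm{LTSM}}\mid\theta_t]=0$. The second term does not depend on $s$. Hence the inner expectation at time $t$ equals $\mathbb{E}_{p_t}\|s-m\|^2+C(t)$, which is minimized exactly when $s=m$ $p_t$-a.e. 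Uniqueness in $L^2(p_t)$ follows because $\mathbb{E}\|s-m\|^2=0$ forces $s=m$ a.e.

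\textbf{Step 2 (identification).} By Proposition 3.1, $m(\theta_t,t)=\alpha(t)^{-1}\mathbb{E}[\nabla_{\theta_0}\log p(\theta_0,z)\mid\theta_t]=\nabla_{\theta_t}\log p_t(\theta_t)$. For $t>0$ the density $p_t>0$ everywhere, being a Gaussian convolution, so the ``a.e.'' qualifier is with respect to a measure equivalent to Lebesgue measure.

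\textbf{Step 3 (integrating over $t$).} We have $\mathcal{L}_{\mathrm{LTSM}}(s)=\int_0^1\eta(t)\big(\mathbb{E}\|s-m\|^2+C(t)\big)\,dt$ with $\eta>0$. A minimizer of the integrated objective, assuming $\int\eta C<\infty$ so the loss is finite at $s=m$, must satisfy $\int\eta(t)\,\mathbb{E}\|s-m\|^2\,dt=0$. This gives $s(\cdot,t)=m(\cdot,t)$ for almost every $t$, which is precisely the ``for almost every $t$'' claim.

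The main obstacle is not the algebra but the integrability bookkeeping. Specifically, we must ensure $C(t)<\infty$; note that $\alpha(t)^{-2}$ is bounded on $[0,1]$, so this reduces to the Fisher-information condition. We must also verify that the hypotheses (A1)--(A3) of Proposition 3.1 hold for each $t$ in a full-measure set. I would handle this by stating the finite-moment assumption explicitly and noting that everything else is inherited from Proposition 3.1.
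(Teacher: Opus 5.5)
Your proposal is correct and takes essentially the same route as the paper's proof. Both decompose the squared error around $m(\theta_t,t)=\mathbb{E}[y_{\mathrm{LTSM}}\mid\theta_t]$, kill the cross term by conditioning on $\theta_t$, identify $m$ with $\nabla_{\theta_t}\log p_t(\theta_t)$ via the Latent Target Score Identity, and integrate over $t$ using $\eta(t)>0$. Your extra bookkeeping makes rigorous what the paper leaves implicit: finite $\mathbb{E}\|\nabla_{\theta_0}\log p(\theta_0,z)\|^2$, finiteness of $\int\eta(t)C(t)\,dt$, and the explicit ``almost every $t$'' argument in Step 3.
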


\begin{proof}
Let $y_t:=\alpha(t)^{-1}\nabla_{\theta_0}\log p(\theta_0,z)$ and write the objective as
\[
\mathcal{L}_{\mathrm{LTSM}}(\psi)=\int_0^1 \eta(t)\,\mathbb{E}\!\big[\ \|s_\psi(\theta_t,t)-y_t\|^2\ \big]\;dt,
\qquad \eta(t)>0.
\]
Fix $t$ and abbreviate $s(\cdot):=s_\psi(\cdot,t)$ and 
\(
m(\theta_t):=\mathbb{E}[\,y_t\mid \theta_t\,].
\)
Use the elementary identity
\[
\|s(\theta_t)-y_t\|^2
=\|s(\theta_t)-m(\theta_t)\|^2+\|y_t-m(\theta_t)\|^2
-2\langle s(\theta_t)-m(\theta_t),\,y_t-m(\theta_t)\rangle.
\]
Taking conditional expectation given $\theta_t$ kills the cross term because
$\mathbb{E}[\,y_t-m(\theta_t)\mid \theta_t\,]=0$. Hence
\[
\mathbb{E}\big[\|s(\theta_t)-y_t\|^2\big]
=\mathbb{E}\big[\|s(\theta_t)-m(\theta_t)\|^2\big]
+\mathbb{E}\big[\|y_t-m(\theta_t)\|^2\big].
\]
The second term does not depend on $s$, and the first is minimized uniquely when
$s(\theta_t)=m(\theta_t)$ (nonnegative with equality iff $s(\theta_t)=m(\theta_t)$ almost surely).
Therefore the pointwise minimizer at time $t$ is
\[
s^\star(\theta_t,t)=\mathbb{E}[\,y_t\mid \theta_t\,].
\]
By the Latent Target Score Identity (LTSI),
\[
\mathbb{E}[\,y_t\mid \theta_t\,]
=\frac{1}{\alpha(t)}\,\mathbb{E}\!\left[\nabla_{\theta_0}\log p(\theta_0,z)\,\middle|\,\theta_t\right]
=\nabla_{\theta_t}\log p_t(\theta_t).
\]
Since $\eta(t)>0$, integrating over $t$ preserves this minimizer for each $t$, which proves that
$s_\psi^\star(\theta_t,t)=\nabla_{\theta_t}\log p_t(\theta_t)$.
\end{proof}   
\clearpage

\end{document}